\newcommand{\sign}{\mathop{\text{sign}}}
\newtheorem{thm}{Theorem}
\newtheorem{defn}{Definition}
\newtheorem{lemma}{Lemma}
\newtheorem{cor}{Corollary}
\newtheorem{prop}{Proposition}
\newtheorem{remark}{Remark}
\begin{document}
%
\title{Information Theoretic Limits for Linear Prediction with Graph-Structured Sparsity}

\author{Adarsh Barik \\
Krannert School of Management\\
Purdue University\\
West Lafayette, Indiana 47906\\
Email: abarik@purdue.edu
\and
Jean Honorio \\
Department of Computer Science\\
Purdue University\\
West Lafayette, Indiana 47906\\
Email: jhonorio@purdue.edu
\and
Mohit Tawarmalani \\
Krannert School of Management\\
Purdue University\\
West Lafayette, Indiana 47906\\
Email: mtawarma@purdue.edu}
\date{}

\maketitle

\begin{abstract}
We analyze the necessary number of samples for sparse vector recovery in a noisy linear prediction setup. This model includes problems such as linear regression and classification. We focus on structured graph models. In particular, we prove that sufficient number of samples for the weighted graph model proposed by Hegde and others~\cite{hegde2015nearly} is also necessary. We use the Fano's inequality~\cite{Cover06} on well constructed ensembles as our main tool in establishing information theoretic lower bounds.
\end{abstract}

\paragraph{Keywords:}
Compressive sensing, Linear Prediction, Classification, Fano's Inequality, Mutual Information, Kullback Leibler divergence.

\section{Introduction}
Sparse vectors are widely used tools in fields related to high dimensional
data analytics such as machine learning, compressed sensing and statistics.
This makes estimation of sparse vectors an important field of research. In a compressive sensing setting, the problem is to closely approximate a $d-$dimensional signal by an $s-$sparse vector without losing much information. For regression, this is usually done by observing the inner product of the signal with a design matrix. It is a well known fact that if the design matrix satisfies the Restricted Isometry Property (RIP) then estimation can be done efficiently with a sample complexity of $O(s \log \frac{d}{s})$. Many algorithms such as CoSamp~\cite{needell2009cosamp}, Subspace Pursuit (SP)~\cite{dai2008subspace} and Iterative Hard Thresholding (IHT)~\cite{blumensath2009iterative} provide high probability performance guarantees. Baraniuk and others~\cite{baraniuk2010model} came up with a model based sparse recovery framework. Under this framework, the sufficient number of samples for correct recovery is logarithmic with respect to the cardinality of the sparsity model. 

A major issue with the model based framework is that it does not provide any recovery algorithm on its own. In fact, it is some times very hard to come up with an efficient recovery algorithm. Addressing this issue, Hegde and others~\cite{hegde2015nearly} came up with a weighted graph model for graph structured sparsity and provided a nearly linear time recovery algorithm. They also analyzed the sufficient number of samples for efficient recovery. In this paper, we will provide the necessary condition on the sample
complexity for sparse recovery on a weighted graph model. We will also note that our information theoretic lower bound can be applied not only to linear regression but also to other linear prediction tasks such as classification.

The paper is organized as follows. We describe our setup in Section~\ref{sec:model}. Then we briefly describe the weighted graph model in Section~\ref{sec:wgm}. We state our results in Section~\ref{sec:results}. In Section~\ref{sec:specific example}, we apply our technique to some specific examples. At last, we provide some concluding remarks in Section~\ref{sec:conclusion}.

\section{Linear Prediction Model}
\label{sec:model}
In this section, we introduce the observation model for linear prediction and later specify how to use it for specific problems such as linear regression and classification. 
Formally, the problem is to estimate an $s-$sparse vector $\bar{\beta}$ from noisy observations of the form, 
\begin{align}
\label{eq:linreg}
z = f(X \bar{\beta} + e)\ ,
\end{align}
\noindent
where $z \in \mathbb{R}^n$ is the observed output, $X \in \mathbb{R}^{n \times d}$ is the design matrix , $e \in \mathbb{R}^n $ is a noise vector and $f:\mathbb{R}^n\rightarrow \mathbb{R}^n$ is a fixed function. Our task is to recover $\bar{\beta} \in \mathbb{R}^d$ from the observations $z$.

\vspace*{-0.5pt}
\subsection{Linear Regression} 
\label{subsec:linreg}
Linear regression is a special case of the above by choosing $f (x) = x$. Then we simply have,
\begin{align}
\label{eq:linreg1}
z = X \bar{\beta} + e \ .
\end{align}
\noindent
Prior work analyzes the sample complexity of sparse recovery for the linear regression setup. In particular, if the design matrix $X$ satisfies the Restricted Isometry Property (RIP) then algorithms such as CoSamp~\cite{needell2009cosamp}, Subspace Pursuit (SP)~\cite{dai2008subspace} and Iterative Hard Thresholding (IHT)~\cite{blumensath2009iterative} can recover $\bar{\beta}$ quite efficiently and in a stable way with a sample complexity of $O(s \log\frac{d}{s})$. Furthermore, it is known that Gaussian random matrices (or sub-Gaussian in general) satisfy RIP~\cite{baraniuk2008simple}. If we choose our design matrix to be a Gaussian matrix and we have a good sparsity model that incorporates extra information on the sparsity structure then we can reduce the sample complexity to $O(\log m_s)$ where $m_s$ is number of possible supports in the sparsity model, i.e., the cardinality of the sparsity model~\cite{baraniuk2010model}. In the same line of work, Hegde and others~\cite{hegde2015nearly} proposed a weighted graph based sparsity model to efficiently learn $\bar{\beta}$.  

\vspace*{-0.5pt}
\subsection{Classification}
We can model binary classification problems by choosing $f(x) = \sign(x)$ or in other words, we can have,
\begin{align}
\label{eq:linreg2}
z = \sign(X \bar{\beta} + e) \ .
\end{align}
Similar to the linear regression setup, there is also prior work~\cite{gupta2010sample}, \cite{ai2014one}, \cite{gopi2013one}, on analyzing the sample complexity of sparse recovery for binary classification problem (also known as 1-bit compressed sensing). 

Since arguments for establishing information theoretic lower bounds are not algorithm specific, we can extend our basic argument to the both settings mentioned above. For comparison, we will use the results by Hegde and others~\cite{hegde2015nearly} in a linear regression setup. 

\section{Weighted Graph Model (WGM)}
\label{sec:wgm}
In this section, we introduce the Weighted Graph Model (WGM) and formally state the sample complexity results from~\cite{hegde2015nearly}. The Weighted Graph Model is defined on an underlying graph $G = (V,E)$ whose vertices are on the coefficients of the unknown $s-$sparse vector $\bar{\beta} \in \mathbb{R}^d$ i.e. $V = [d] = \{1, 2, \dots, d\}$. Moreover, the graph is weighted and thus we introduce a weight function $w : E \rightarrow \mathbb{N}$. Borrowing some notations from~\cite{hegde2015nearly},  for a forest $F \subseteq G$ we denote $\sum_{e \in F} w_e$ as $w(F)$. $B$ denotes the weight budget, $s$ denotes the sparsity (number of non-zero coefficients) of $\bar{\beta}$ and $g$ denotes the number of connected components in $F$. The weight-degree $\rho(v)$ of a node $v \in V$ is the largest number of adjacent nodes connected by edges with the same weight, i.e.,
\begin{align}
\label{eq:weightdegree}
\rho(v) = \max_{b \in N} |\{ (v', v) \in E\ | \ w(v',v) = b \}| \ .
\end{align}  
We define the weight-degree of $G$, $\rho(G)$ to be the maximum weight-degree of any $v \in V$. Next, we define the Weighted Graph Model on coefficients of $\bar{\beta}$ as follows:    

\begin{defn}[Definition 1 in~\cite{hegde2015nearly}]
	\label{def:WGM}
	The $(G, s, g, B)-WGM$ is the set of supports defined as
	\begin{align*}
	\mathbb{M} = \left\lbrace S \subseteq [d] \ | \ |S| = s \ and \ \exists\ F \subseteq G\ with\ V_F = S, \right. \\
	\left. \gamma(F) = g,\ w(F) \leq B \right\rbrace\ ,
	\end{align*}
\end{defn} 
where $\gamma(F)$ is number of connected components in a forest $F$. Authors in ~\cite{hegde2015nearly} provide the following sample complexity result for linear regression under their model:
\begin{thm}[Theorem 3 in~\cite{hegde2015nearly}]
	Let $\bar{\beta} \in \mathbb{R}^d$ be in the $(G, s, g, B)-WGM$. Then 
	\begin{align}
	\label{eq:samplecomplexity}
	n = O(s(\log \rho(G) + \log \frac{B}{s}) + g \log \frac{d}{g})
	\end{align}
	i.i.d. Gaussian observations suffice to estimate $\bar{\beta}$. More precisely, let $e \in \mathbb{R}^n$ be an arbitrary noise vector from equation \eqref{eq:linreg1} and $X$ be an i.i.d. Gaussian matrix. Then we can efficiently find an estimate $\hat{\beta}$ such that 
	\begin{align}
	\label{eq:estimatederror}
	\| \bar{\beta} - \hat{\beta} \| \leq C \|e\|\ ,
	\end{align}
	where $C$ is a constant indepenedent of all variables above.
\end{thm}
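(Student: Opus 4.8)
The plan is to follow the model-based compressive sensing paradigm of Baraniuk and others~\cite{baraniuk2010model}: reduce the recovery guarantee to (i) a model-restricted isometry property for the Gaussian design and (ii) a combinatorial bound on the size of the sparsity model, and then recover computational efficiency via approximate projection oracles. Writing $\mathbb{M}$ for the set of supports in the $(G,s,g,B)$-WGM and $m_s=|\mathbb{M}|$, the target reduces to showing that $n = O(s+\log m_s)$ i.i.d.\ Gaussian rows suffice, together with the estimate $\log m_s = O\big(s(\log\rho(G)+\log\tfrac{B}{s}) + g\log\tfrac{d}{g}\big)$, which then yields \eqref{eq:samplecomplexity}.

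First I would establish the counting bound on $m_s$. A support $S\in\mathbb{M}$ is the vertex set of a forest $F\subseteq G$ with exactly $g$ components, $|S|=s$ vertices, and $w(F)\le B$. I would count such forests by: (i) choosing one representative vertex per component, at cost at most $\binom{d}{g}\le(ed/g)^g$, which contributes the $g\log(d/g)$ term; and (ii) growing the remaining $s-g$ vertices one edge at a time. For (ii), fix the multiset of $s-g$ edge weights --- an integer composition into $s-g$ positive parts summing to at most $B$, of which there are at most $\binom{B}{s-g}\le(eB/(s-g))^{s-g}$, giving the $s\log(B/s)$ term --- and observe that for each new edge of prescribed weight $b$ the weight-degree definition \eqref{eq:weightdegree} limits the choice of the new endpoint to at most $\rho(G)$ options, contributing $\rho(G)^{s-g}$ and hence the $s\log\rho(G)$ term (with $g=s$ handled separately, where no edges appear and $m_s=\binom{d}{s}$). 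Multiplying and taking logarithms gives the claimed bound on $\log m_s$.

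Next I would prove that a matrix $X$ with i.i.d.\ $N(0,1/n)$ entries satisfies the $(\mathbb{M},\delta)$-model-RIP with high probability once $n=O(\delta^{-2}(s+\log m_s))$, by the standard union-of-subspaces argument: for a single $s$-dimensional coordinate subspace a $\delta/4$-net of its unit sphere has size $(12/\delta)^s$, Gaussian concentration gives the isometry on all net points with failure probability $\le 2(12/\delta)^s e^{-c\delta^2 n}$, this extends to the whole subspace, and a union bound over the $m_s$ subspaces indexed by $\mathbb{M}$ forces $n\gtrsim\delta^{-2}(s\log\tfrac1\delta+\log m_s)$. With model-RIP in hand, the exact-projection versions of model-IHT or model-CoSaMP already give $\|\bar\beta-\hat\beta\|\le C\|e\|$; since projecting onto $\mathbb{M}$ is not known to be polynomial time, I would instead invoke the approximation-tolerant recovery framework: supply a head-approximation and a tail-approximation oracle for $\mathbb{M}$ --- each returning a support in a mildly enlarged model $\mathbb{M}'\supseteq\mathbb{M}$ with $\log|\mathbb{M}'|=O(\log m_s)$ --- after which the approximate-projection variants of IHT/CoSaMP output $\hat\beta$ with the stated stability, and in nearly-linear time if the oracles do.

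The main obstacle is precisely this last step: constructing nearly-linear-time head and tail projection oracles for the weighted graph model, and controlling the blow-up of the model under those approximations. The natural route is to recast the tail projection (a minimum-weight $g$-component forest covering the heavy coordinates) and the head projection as prize-collecting Steiner tree/forest problems and run a fast primal--dual (Goemans--Williamson-type) approximation; one must then argue that the returned forests, though possibly violating the exact budget $B$ or component count $g$ by constant factors, still lie in an enlarged WGM whose log-cardinality remains $O\big(s(\log\rho(G)+\log\tfrac{B}{s})+g\log\tfrac{d}{g}\big)$, so that the model-RIP bound applies verbatim with the same $n$. Verifying that the two oracles' approximation factors compose within the tolerance required by the approximation-tolerant recovery, and that the stability constant $C$ stays independent of $d,s,g,B$, is the delicate bookkeeping that remains.
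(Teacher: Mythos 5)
This statement is not proved in the paper at all: it is quoted verbatim as Theorem~3 of~\cite{hegde2015nearly} to serve as the upper bound against which the paper's own lower bounds (Theorems~\ref{thm:mainresult} and~\ref{thm:noisymainresult}) are compared. Your outline --- bounding $\log|\mathbb{M}|$ by $O(s(\log\rho(G)+\log\frac{B}{s})+g\log\frac{d}{g})$, establishing the model-RIP for Gaussian matrices with $n=O(s+\log|\mathbb{M}|)$ rows via the union-of-subspaces argument of~\cite{baraniuk2010model}, and recovering efficiency through approximation-tolerant IHT/CoSaMP with head- and tail-projection oracles built from prize-collecting Steiner forest primal--dual methods --- is essentially the route taken in the cited source itself, so it is consistent with the intended (external) proof rather than an alternative to anything in this paper.
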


Notice that in the noiseless case $(e = 0)$, we recover the exact $\bar{\beta}$. We will prove that information-theoretically, the bound on the sample complexity is tight and thus the algorithm of~\cite{hegde2015nearly} is statistically optimal.

\section{Main Results}
\label{sec:results}
In this section, we will state our results for both the noiseless and the noisy case. We establish an information theoretic lower bound on linear prediction problem defined on WGM. We use Fano's inequality~\cite{Cover06} to prove our result by carefully constructing an ensemble, i.e., a WGM. Any algorithm which infers $\bar{\beta}$ from this particular WGM would require a minimum number of samples. Note that the use of restricted ensembles is customary for information-theoretic lower bounds~\cite{santhanam2012information}~\cite{wang2010information}. It follows that in the case of linear regression, the  upper bound on the sample complexity by Hegde and others~\cite{hegde2015nearly} is indeed tight. 

\vspace*{-0.5pt}
\subsection{Noiseless Case}
 Here, we provide a necessary condition on the sample complexity for exact recovery in the noiseless case. More formally, 
\begin{thm}
	\label{thm:mainresult}
	There exists a particular $(G, s, g, B)-WGM$, and a particular set of weights for the entries in the support of $\bar{\beta}$ such that if we draw a $\bar{\beta} \in \mathbb{R}^d$ uniformly at random and we have a data set $\mathcal{S}$ of $n \in o((s-g) (\log \rho(G) + \log \frac{B}{s-g}) + g \log \frac{d}{g} + (s - g) \log \frac{g}{s - g} + s \log 2)$ i.i.d. observations as defined in equation~\eqref{eq:linreg} with $e = 0$ then $P(\bar{\beta} \neq \hat{\beta}) \geq \frac{1}{2}$ irrespective of the procedure we use to infer $\hat{\beta}$ on $(G,s,g,B)-WGM$ from $\mathcal{S}$.  
\end{thm}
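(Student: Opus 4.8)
The plan is to obtain the lower bound through the classical restricted-ensemble technique together with Fano's inequality, exactly as advertised in the introduction. Fix magnitudes $\alpha_1,\dots,\alpha_s$ for the nonzero coordinates (all equal works), and consider the prior that draws $\bar\beta$ uniformly from
\[
\mathcal{B} = \big\{\,\bar\beta \ :\ \operatorname{supp}(\bar\beta) = S \in \mathbb{M}_0,\ \bar\beta_j = \sigma_j\alpha_j \text{ for } j\in S,\ \sigma\in\{\pm1\}^S \,\big\},
\]
where $\mathbb{M}_0$ is a sub-collection of supports of the $(G,s,g,B)$-WGM that I will build by hand. This is a legitimate distribution over the WGM with the prescribed weights, and every element of $\mathcal{B}$ is exactly $s$-sparse. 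Fano's inequality then gives, for every (possibly randomized) estimator $\hat\beta$,
\[
P(\hat\beta \neq \bar\beta) \ \ge\ 1 - \frac{I(\bar\beta;\mathcal{S}) + \log 2}{\log |\mathcal{B}|},
\]
so it suffices to (i) lower bound $\log|\mathcal{B}| = \log|\mathbb{M}_0| + s\log 2$ by a constant times the claimed rate, and (ii) upper bound $I(\bar\beta;\mathcal{S})$ by a quantity that is $o(\log|\mathcal{B}|)$ whenever $n$ is $o(\log|\mathcal{B}|)$.

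Step (i) is where I expect the real work to be. I would engineer $G$, the weight function $w$, and $\mathbb{M}_0$ so that the four logarithmic contributions in the target rate arise from four independent combinatorial choices. A natural design: take $g$ disjoint gadgets, each a rooted tree of weight-degree $\rho(G)$; the $g$ roots may be anchored in roughly $\binom{d}{g}$ ways, contributing $\Theta(g\log\frac dg)$; the $s-g$ non-root support vertices may be apportioned among the $g$ gadgets in $\Theta\!\big((s-g)\log\frac{g}{s-g}\big)$ ways; and, inside a gadget, each appended vertex may be chosen among the $\rho(G)$ equal-weight neighbors subject to keeping the total edge weight at most $B$, which (a composition/balls-into-bins count under budget $B$) contributes $\Theta\!\big((s-g)\log\rho(G) + (s-g)\log\frac{B}{s-g}\big)$. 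I would then verify that every support produced this way has cardinality $s$, exactly $g$ connected components, and weight at most $B$ (so $\mathbb{M}_0 \subseteq \mathbb{M}$), and that distinct choices yield distinct pairs $(S,\sigma)$, so that $\log|\mathcal{B}|$ is the sum of the four terms plus $s\log 2$. Matching this against the sufficient condition of \cite{hegde2015nearly} up to constants is what certifies that the bound is tight and the algorithm of \cite{hegde2015nearly} statistically optimal.

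For step (ii), observe that the sampled design rows $x_1,\dots,x_n$ are drawn independently of $\bar\beta$, so $I(\bar\beta;\mathcal{S}) = I(\bar\beta; z_1,\dots,z_n \mid x_1,\dots,x_n) \le \sum_{i=1}^n H(z_i \mid x_i)$. In the noiseless case with a bounded-range link — in particular $f=\sign$ for classification — each $z_i$ is a single bit given $x_i$, hence $H(z_i\mid x_i)\le\log 2$ and $I(\bar\beta;\mathcal{S}) \le n\log 2$; more generally $H(z_i\mid x_i)$ is controlled by the log-alphabet size of $f$ (and, in the noisy variant to be treated separately, by the per-sample channel capacity / KL radius of the ensemble). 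Substituting into Fano yields $P(\hat\beta\neq\bar\beta) \ge 1 - \frac{(n+1)\log 2}{\log|\mathcal{B}|}$, which is at least $\tfrac12$ once $n \le \tfrac12\log_2|\mathcal{B}| - 1$; together with the estimate from step (i) this is precisely the regime $n \in o\big((s-g)(\log\rho(G)+\log\frac{B}{s-g}) + g\log\frac dg + (s-g)\log\frac{g}{s-g} + s\log 2\big)$ asserted in the theorem. The main obstacle is therefore the explicit ensemble of step (i): one must exhibit a single weighted graph whose admissible support family simultaneously realizes all four logarithmic counts, respects the budget and component constraints, and is large enough to reach the upper bound of \cite{hegde2015nearly} — once that is in hand, the information-theoretic part (Fano plus the per-sample entropy bound) is routine.
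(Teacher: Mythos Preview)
Your Fano-plus-restricted-ensemble plan and the counting in step~(i) are essentially what the paper does: it too partitions $[d]$ into $g$ groups of $d/g$ nodes, picks one root per group ($(d/g)^g$ choices), attaches $s/g-1$ further vertices per component chosen among $\rho(G)B/(2(s-g))$ budget-respecting neighbors, and multiplies by $2^s$ for the sign pattern, obtaining $\log|\mathcal F|$ of the claimed order.

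The gap is in step~(ii). Your bound $I(\bar\beta;\mathcal S)\le\sum_i H(z_i\mid x_i)$ is only useful when $f$ has finite range. For the principal case $f(x)=x$ (noiseless regression), $z_i=x_i^\top\bar\beta$ is real-valued and, for generic $x_i$, takes $|\mathcal B|$ distinct values, so $H(z_i\mid x_i)=\log|\mathcal B|$ and Fano becomes vacuous; indeed, if the design row were observed, a single real measurement would identify $\bar\beta$ almost surely. The paper avoids this by \emph{not} conditioning on $X$: it treats the design as random and unobserved, with entries i.i.d.\ $\mathcal N\!\big(\tfrac{1}{s\sqrt2},\tfrac1s\big)$, and bounds the pairwise KL of the \emph{marginal} laws $y_i\mid\beta\sim\mathcal N\!\big(\tfrac{1}{s\sqrt2}\sum_k\beta_k,\,1\big)$. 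With $\beta_k\in\{-1,1\}$ on an $s$-support these Gaussians share variance $1$ and have means in $[-1/\sqrt2,1/\sqrt2]$, so every pair has $\mathbb{KL}\le1$, giving $\mathbb I(\bar\beta;\mathcal S')\le n$; the data-processing inequality $\mathbb I(\bar\beta;\mathcal S)\le\mathbb I(\bar\beta;\mathcal S')$ then handles every $f$ simultaneously. You need this design-as-noise / KL-radius device (or an equivalent) to make the identity-$f$ case go through; the per-sample entropy argument does not suffice.
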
 
\begin{proof}[Proof sketch]
	We use Fano's inequality~\cite{Cover06} on a carefully chosen restricted ensemble to prove our theorem. A detailed proof can be found in appendix.
\end{proof}

\vspace*{-0.5pt}
\subsection{Noisy Case}
A similar result can be stated for the noisy case. However, in this case recovery is not exact but is sufficiently close in $l_2$-norm with respect to noise in the signal. Another thing to note is that in~\cite{hegde2015nearly} inferred $\hat{\beta}$ can come from a slightly bigger WGM model but here we actually infer $\hat{\beta}$ from the same WGM. 
\begin{thm}
	\label{thm:noisymainresult}
	There exists a particular $(G, s, g, B)-WGM$, and a particular set of weights for the entries in the support of $\bar{\beta}$ such that if we draw a $\bar{\beta} \in \mathbb{R}^d$ uniformly at random and we have a data set $\mathcal{S}$ of $n \in o((s-g) (\log \rho(G) + \log \frac{B}{s-g}) + g \log \frac{d}{g} + (s-g) \log \frac{g}{s-g} + s \log 2)$ i.i.d. observations as defined in equation~\eqref{eq:linreg} with $e_i \stackrel{iid}{\sim} \mathcal{N}(0,\sigma), \forall i \in \{1 \dots n \}$ then  $ \mathbb{P}(\| \bar{\beta} - \hat{\beta}  \| \geq C \| e \|) \geq \frac{1}{10}$ for $0 < C \leq C_0$ irrespective of the procedure we use to infer $\hat{\beta}$ on $(G,s,g,B)-WGM$ from $\mathcal{S}$.
\end{thm}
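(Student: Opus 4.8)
The plan is to apply Fano's inequality to a restricted ensemble, exactly as in the noiseless case, but with two extra ingredients needed to pass from exact recovery to $\ell_2$-recovery: a uniform lower bound on the pairwise Euclidean separation of the ensemble, and a concentration bound for $\|e\|$. I would reuse the weighted graph $G$ and family of supports underlying Theorem~\ref{thm:mainresult}, which supplies vectors $\beta_1,\dots,\beta_M$ lying in the $(G,s,g,B)$-WGM with $\log M=\Theta\!\big((s-g)(\log\rho(G)+\log\tfrac{B}{s-g})+g\log\tfrac dg+(s-g)\log\tfrac{g}{s-g}+s\log2\big)$, except that I now set every nonzero entry to a common magnitude $\lambda$, to be fixed at the end, and --- this is the crucial structural point --- pass to a sub-ensemble, any two members of which differ on $\Omega(s)$ coordinates, so that $\min_{j\neq k}\|\beta_j-\beta_k\|\geq c_1\lambda\sqrt s$ for an absolute constant $c_1>0$. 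Each of the $g$ blocks of the construction contributes an independent choice from a (typically large) alphabet --- the block root among $\approx d/g$ vertices, the attached node among $\approx\rho(G)B/(s-g)$ options, and the signs --- so such a sub-ensemble can be extracted by a Gilbert--Varshamov / random-coding argument at the level of blocks; because $g>s-g$ forces $g>s/2$, having a constant fraction of the $g$ blocks disagree already yields $\Omega(s)$ differing coordinates, and the thinning costs only a constant factor in $\log M$.

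Next I would reduce the recovery question to a multiway test. Draw $\bar\beta$ uniformly from the thinned ensemble, and for any estimator $\hat\beta=\hat\beta(\mathcal S)$ let $\psi(\hat\beta)$ be a member of the ensemble nearest to $\hat\beta$ in Euclidean norm. Since $\mathbb E\|e\|^2=n\sigma^2$, Markov's inequality gives $\mathbb P(\|e\|>2\sigma\sqrt n)\leq\tfrac14$; write $\mathcal E$ for the complement. On $\mathcal E$, if $\lambda$ is large enough that $c_1\lambda\sqrt s\geq4C\sigma\sqrt n$, then $\min_{j\neq k}\|\beta_j-\beta_k\|\geq2C\|e\|$, so every $\hat\beta$ with $\|\hat\beta-\bar\beta\|<C\|e\|$ is strictly closer to $\bar\beta$ than to any other ensemble point, hence $\psi(\hat\beta)=\bar\beta$. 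Contrapositively $\{\|\hat\beta-\bar\beta\|\geq C\|e\|\}\supseteq\mathcal E\cap\{\psi(\hat\beta)\neq\bar\beta\}$, so $\mathbb P(\|\hat\beta-\bar\beta\|\geq C\|e\|)\geq\mathbb P(\psi(\hat\beta)\neq\bar\beta)-\tfrac14$.

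Finally I would bound the test's error by Fano, $\mathbb P(\psi(\hat\beta)\neq\bar\beta)\geq1-\frac{I(\bar\beta;\mathcal S)+\log2}{\log M}$, and control the mutual information. Taking $X$ in the standard normalization so that $\mathbb E\|Xv\|^2=\|v\|^2$ (as in~\cite{hegde2015nearly}), and observing that $z=f(X\bar\beta+e)$ is a post-processing of $X\bar\beta+e$ so that the data-processing inequality lets me assume $f=\mathrm{id}$, the independence of $X$ and $\bar\beta$ together with convexity of the Kullback--Leibler divergence in its second argument gives $I(\bar\beta;\mathcal S)=I(\bar\beta;z\mid X)\leq\frac1{2\sigma^2M^2}\sum_{j,k}\mathbb E_X\|X(\beta_j-\beta_k)\|^2=\frac1{2\sigma^2M^2}\sum_{j,k}\|\beta_j-\beta_k\|^2\leq\frac{2s\lambda^2}{\sigma^2}$, the last step because $\|\beta_j-\beta_k\|^2\leq4s\lambda^2$. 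I would then set $\lambda^2=\frac{\sigma^2\log M}{20\,s}$: this makes $I(\bar\beta;\mathcal S)\leq\tfrac1{10}\log M$, hence $\mathbb P(\psi(\hat\beta)\neq\bar\beta)\geq\tfrac9{10}-\frac{\log2}{\log M}\geq\tfrac45$ once $M$ is large, while the separation requirement $c_1\lambda\sqrt s\geq4C\sigma\sqrt n$ becomes $n\leq c_2\log M/C^2$ for an absolute constant $c_2$, which holds for all sufficiently large instances because $C\leq C_0$ is bounded and $n\in o(\log M)$ (recall that $\log M$ has the same order as the claimed bound). Combining, $\mathbb P(\|\hat\beta-\bar\beta\|\geq C\|e\|)\geq\tfrac45-\tfrac14>\tfrac1{10}$, as claimed. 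I expect the main obstacle to be the first step: in contrast with the noiseless case, where only the cardinality of the ensemble matters, here the ensemble must in addition be an approximately equidistant code --- separated by $\Omega(s)$ in Hamming weight --- that still lives inside one weighted graph of weight-degree $\rho(G)$ and budget $B$ and still realizes every degree of freedom (component roots, the assignment of the $s-g$ extra nodes to components, the attached nodes and their weights, and the $s$ signs); verifying that the block-wise random-coding thinning keeps $\log M$ of the same order as the bound, across the regimes of $d/g$ and $\rho(G)B/(s-g)$, is the delicate part.
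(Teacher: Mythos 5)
Your overall architecture is the standard minimax-testing recipe (packing set $+$ Markov bound on $\|e\|$ $+$ Fano with a pairwise-KL bound), which is genuinely different from the paper's route, and most of its pieces are fine: the reduction from $\ell_2$-recovery to identification via the nearest-point map, the Markov bound $\mathbb{P}(\|e\|>2\sigma\sqrt n)\le\frac14$, the conditional Fano step, and the KL computation under the $\mathcal N(0,\frac1n)$ design are all correct as stated. However, the load-bearing step --- extracting a sub-ensemble of the WGM with pairwise separation $\Omega(\lambda\sqrt s)$ while keeping $\log M$ of the same order --- is exactly where your argument has a genuine gap, and you cannot discharge it the way you indicate. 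The justification ``$g>s-g$ forces $g>s/2$'' has no basis: nothing in the theorem (or in requirements R1--R3) imposes $g\ge s-g$, and the paper's Remark explicitly targets the regime $s\gg g$, where $g=o(s)$. In that regime a constant fraction of the $g$ blocks disagreeing yields only $\Omega(g)=o(s)$ differing coordinates, so the separation must come from \emph{within} blocks, i.e.\ you need per-block codes of Hamming distance $\Omega(s/g)$ over the root/node/sign choices. That thinning is not obviously a constant-factor loss: e.g.\ when $\binom{\rho(G)B/(2(s-g))}{s/g-1}$ is small (R2 permits near-equality), nearby roots give supports with symmetric difference $O(|i-i'|)$, and enforcing per-block distance $\Omega(s/g)$ shrinks the root entropy from $g\log\frac dg$ to roughly $g\log\frac ds$, which is not within a constant factor of $g\log\frac dg$ in general; one then has to argue case by case that some other term ($s\log2$, the node-choice term) dominates. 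Without this lemma your bound silently degrades: if the minimum separation is only $\Theta(\lambda)$ rather than $\Theta(\lambda\sqrt s)$, the same calculus forces $n=o(\frac{\log M}{s})$, a factor-$s$ weaker statement than claimed.

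It is worth noting how the paper avoids this issue entirely, since it explains why no packing lemma appears there: all nonzero entries are taken of magnitude $\Theta\big(\frac{C_0\sigma\sqrt d}{\sqrt{1-\epsilon}}\big)$ with the two allowed values differing by exactly $\frac{C_0\sigma\sqrt d}{\sqrt{1-\epsilon}}$, so \emph{any} two distinct ensemble members (even differing in a single coordinate) are separated by at least $\frac{C_0\sigma\sqrt d}{\sqrt{1-\epsilon}}$ (Lemma~\ref{lemma:ballcovering}), which on the event of Lemma~\ref{lemma:ebound} exceeds $C\|e\|$ because $d>n$ and $C\le C_0$; exact-recovery Fano then suffices. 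The price of these huge magnitudes is paid on the information side: with a $\mathcal N(0,1)$ design the paper bounds the KL between the \emph{marginal} laws $y_i\mid\beta\sim\mathcal N(0,\|\beta\|^2+\sigma^2)$, whose variances are within a constant factor across the ensemble, giving $\mathbb{I}(\bar\beta,\mathcal S')\le 5n$. So the two proofs trade off differently (your route keeps the design conditioned on and the signal small but needs a code; the paper's keeps the full ensemble but needs the variance-ratio trick), and yours is only complete once the block-wise Gilbert--Varshamov extraction is actually carried out and verified, across all parameter regimes, to lose at most a constant factor in $\log M$.
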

\begin{remark}
	Note that when $s \gg g$ and $B \geq s-g$ then $\Omega((s-g) (\log \rho(G) + \log \frac{B}{s-g}) + g \log \frac{d}{g} + (s - g) \log \frac{g}{s - g} +  s\log 2) $ is roughly $\Omega(s(\log \rho(G) + \log \frac{B}{s}) + g \log \frac{d}{g})$.
\end{remark}
\begin{proof}
	We will prove this result in three steps. First, we will carefully construct an underlying graph $G$ for the WGM. Second, we will
	bound mutual information between $\bar{\beta}$ and $\mathcal{S}$ by bounding the Kullback-Leibler (KL) divergence. Third, we will bound the size of properly defined restricted ensemble to complete our proof.
	\begin{figure*}[!t]
		\includegraphics[width=0.9\textwidth]{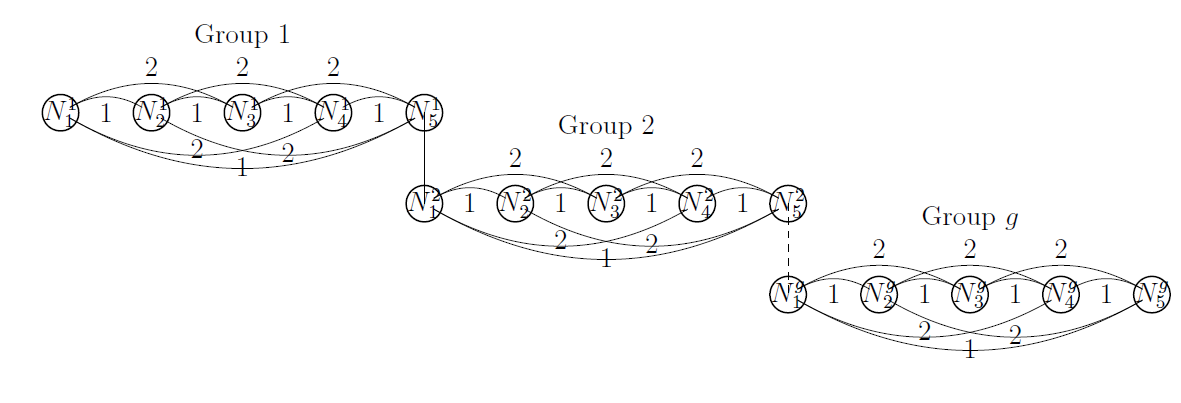}
		\centering \caption{An example of constructing an underlying graph for $\rho(G) = 2$ and $\frac{B}{s-g} = 2$}
		\label{figgraphconst}
	\end{figure*}
	\begin{figure*}[!t]
		\includegraphics[width=0.9\textwidth]{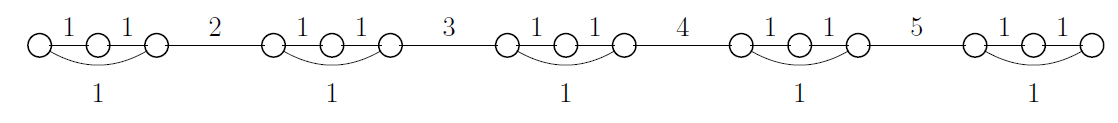}
		\centering\caption{An example of an underlying graph $G$ for $(G, s, g, B)-WGM$ with parameters $d=15, s=10,  g=5, B=5, \rho(G) = 2$}
		\label{figgraphex}
	\end{figure*}
	\noindent
	\paragraph{Constructing an underlying graph $G$ for the WGM} We construct an underlying graph for the WGM using the following steps:
	\begin{itemize}
		\item Divide $d$ nodes equally into $g$ groups with each group having $\frac{d}{g}$ nodes.
		\item For each group $j$, we denote a node by $N_i^j$ where $j$ is the group index and $i$ is the node index. Each group $j$, contains nodes from $N_1^j$ to $N_{\frac{d}{g}}^j$.
		\item We allow for circular indexing, i.e., a node $N_i^j$ where $i > \frac{d}{g}$ is same as node $N_{i - \frac{d}{g}}^j$.  
		\item For each $p = 1,\dots,\frac{B}{s-g}$, node $N_i^j$ has an edge with nodes $N_{i+(p -1)\frac{\rho(G)}{2} + 1}^j$ to $N_{i+p \frac{\rho(G)}{2}}^j$ with weight $p$.
		\item Cross edges between nodes in two different groups are allowed as long as they have edge weights greater than $\frac{B}{s-g}$ and they do not affect $\rho(G)$. 
	\end{itemize}
	Figure~\ref{figgraphconst} shows an example of a graph constructed using the above steps. Furthermore, parameters for our $WGM$ satisfy the following requirements:
	\begin{enumerate}
		\item[R1.] $\frac{d}{g} \geq \frac{\rho(G)B}{s-g} + 1$,
		\item[R2.] $\frac{\rho(G)B}{2(s-g)} \geq \frac{s}{g} - 1$,
		\item[R3.] $B \geq s-g$\ .
	\end{enumerate} 
	These are quite mild requirements (see appendix) on the parameters and are easy to fulfill. Figure~\ref{figgraphex} shows one graph which follows our construction and also fulfills R1, R2 and R3. 
	\noindent
	 We define our restricted ensemble $\mathcal{F}$ on $G$ as:
	\begin{align}
	\label{eq:F noisy}
	\begin{split} 
	\mathcal{F} &= \left\lbrace \beta\ |\ \beta_i = 0, \text{ if } i \notin S,\  \beta_i \in \left\lbrace \frac{C_0\sigma \sqrt{d}}{\sqrt{2(1 - \epsilon)}}, \right.\right. \\
	 &\left. \left. \frac{C_0\sigma \sqrt{d}}{\sqrt{2(1 - \epsilon)}} + \frac{C_0\sigma \sqrt{d}}{\sqrt{(1 - \epsilon)}} \right\rbrace,  \text{ if } i \in S,\ S \in \mathbb{M} \right\rbrace\ ,
	\end{split} 
	\end{align}
	for some $0 < \epsilon < 1$ and $\mathbb{M}$ is as in Definition~\ref{def:WGM}.
	
	\noindent
	Our true $\bar{\beta}$ is picked uniformly at random from the above restricted ensemble. We will prove that on this restricted ensemble, our Theorem~\ref{thm:noisymainresult} holds. We will make use of following lemmas for our proof:
	\begin{lemma}
		\label{lemma:ballcovering}
		Given the restricted ensemble $\mathcal{F}$,
		\begin{align*}
		\| \bar{\beta} - \hat{\beta} \| \leq \frac{C_0\sigma \sqrt{d}}{\sqrt{(1-\epsilon)}} \iff \bar{\beta} = \hat{\beta}\ .
		\end{align*}
	\end{lemma}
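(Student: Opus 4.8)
The plan is to prove Lemma~\ref{lemma:ballcovering} as a packing (minimum-distance) statement about the finite set $\mathcal{F}$. Abbreviate the two admissible nonzero magnitudes as $a = \frac{C_0\sigma\sqrt{d}}{\sqrt{2(1-\epsilon)}}$ and $r = \frac{C_0\sigma\sqrt{d}}{\sqrt{1-\epsilon}}$, so that every $\beta\in\mathcal{F}$ has $\beta_i\in\{0\}$ off its support and $\beta_i\in\{a,a+r\}$ on its support $S\in\mathbb{M}$, and observe the key numerical identity $r^2 = 2a^2$. The backward implication is immediate: $\bar{\beta}=\hat{\beta}$ gives $\|\bar{\beta}-\hat{\beta}\|=0\le r$. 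For the forward implication I would argue the contrapositive, namely that any two \emph{distinct} $\beta,\beta'\in\mathcal{F}$ satisfy $\|\beta-\beta'\|\ge r$; then $\|\bar{\beta}-\hat{\beta}\|\le r$ forces $\bar{\beta}=\hat{\beta}$ (both being elements of the ensemble over which inference is performed).

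The contrapositive splits into two cases according to the supports. First, if $\beta$ and $\beta'$ share the support $S\in\mathbb{M}$, then being distinct they disagree on at least one index $i\in S$; on that index one of them is $a$ and the other is $a+r$, so $(\beta_i-\beta_i')^2 = r^2$ and hence $\|\beta-\beta'\|\ge r$. Second, if $\beta$ has support $S$ and $\beta'$ has support $S'\ne S$ with $S,S'\in\mathbb{M}$, then since $|S|=|S'|=s$ the symmetric difference is balanced, $|S\setminus S'| = |S'\setminus S|\ge 1$. For each $i\in S\setminus S'$ we have $\beta_i\in\{a,a+r\}$ while $\beta_i'=0$, so $(\beta_i-\beta_i')^2\ge a^2$, and symmetrically for each $i\in S'\setminus S$. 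Summing over these at least two coordinates yields $\|\beta-\beta'\|^2 \ge 2a^2 = r^2$. In both cases $\|\beta-\beta'\|\ge r$, which is exactly the claim.

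I expect the only delicate point to be the cross-support case: one must note that the relevant separation there is controlled by the \emph{smaller} admissible magnitude $a$ rather than by the gap $r$, that $\mathbb{M}$ consists of equal-cardinality supports so $S\ne S'$ automatically gives $|S\setminus S'|\ge 1$ on both sides, and that the constants in \eqref{eq:F noisy} were chosen precisely so that $2a^2 = r^2$. Everything else is routine bookkeeping. This minimum-distance estimate is the bridge that converts the $\ell_2$-recovery event $\|\bar{\beta}-\hat{\beta}\|\le r$ into the exact-recovery event $\bar{\beta}=\hat{\beta}$, after which Fano's inequality can be applied to the finite ensemble $\mathcal{F}$ in the remainder of the argument.
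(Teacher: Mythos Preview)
Your proposal is correct and mirrors the paper's own proof almost exactly: both argue the trivial backward direction, then split the contrapositive into the same-support case (one coordinate contributes the gap $r$) and the different-support case (two coordinates from the balanced symmetric difference each contribute at least $a$, and $2a^2=r^2$). Your explicit notation $a,r$ and the remark that $|S|=|S'|=s$ forces $|S\setminus S'|=|S'\setminus S|\ge 1$ make the packing structure a bit more transparent, but the argument is the same.
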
	
	We are dealing with high dimensional cases, hence	moving forward we will assume that $n < d$. We state another lemma:
	\begin{lemma}
		\label{lemma:ebound}
		For some $0 < \epsilon < 1$,
		\begin{align*}
		\mathbb{P}\Big(\|e \|^2 \leq \sigma^2 \frac{n}{1 - \epsilon}\Big) \geq 1 - \exp\Big(-\frac{\epsilon^2 n}{4}\Big)\ .
		\end{align*}
	\end{lemma}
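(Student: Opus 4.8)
The plan is to reduce the statement to a standard chi-squared upper-tail bound. Since $e_i \stackrel{iid}{\sim} \mathcal{N}(0,\sigma)$, the rescaled variables $e_i/\sigma$ are i.i.d.\ standard normal, so $Z \equiv \|e\|^2/\sigma^2 = \sum_{i=1}^n (e_i/\sigma)^2$ is a $\chi^2_n$ random variable. Dividing the event in the lemma by $\sigma^2$, the claim is exactly equivalent to
$$\mathbb{P}\Big(Z > \tfrac{n}{1-\epsilon}\Big) \leq \exp\Big(-\tfrac{\epsilon^2 n}{4}\Big)\ ,$$
so it suffices to prove this one-sided concentration inequality for a $\chi^2_n$ variate, which holds for every $n$ (in particular the assumption $n<d$ made just before the lemma is not needed here).

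First I would recall the moment generating function $\mathbb{E}[e^{\lambda Z}] = (1-2\lambda)^{-n/2}$, valid for $0<\lambda<1/2$, and apply Markov's inequality to $e^{\lambda Z}$ to get, for every such $\lambda$,
$$\mathbb{P}\Big(Z > \tfrac{n}{1-\epsilon}\Big) \leq e^{-\lambda n/(1-\epsilon)}\,(1-2\lambda)^{-n/2}\ .$$
Second, I would minimize the exponent over $\lambda$; the minimizer is $\lambda=\epsilon/2$ (which lies in $(0,1/2)$ precisely because $0<\epsilon<1$), giving
$$\mathbb{P}\Big(Z > \tfrac{n}{1-\epsilon}\Big) \leq \exp\Big(-\tfrac{n\epsilon}{2(1-\epsilon)} - \tfrac{n}{2}\ln(1-\epsilon)\Big)\ .$$
Third — the only genuinely technical point — I would bound this exponent by $-n\epsilon^2/4$. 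Using the series $\tfrac{\epsilon}{1-\epsilon}=\sum_{k\geq 1}\epsilon^k$ and $-\ln(1-\epsilon)=\sum_{k\geq 1}\epsilon^k/k$, the exponent equals $-\tfrac{n}{2}\sum_{k\geq 2}\tfrac{k-1}{k}\epsilon^k$; every term is nonnegative and the $k=2$ term alone equals $\tfrac{n}{2}\cdot\tfrac{\epsilon^2}{2}=\tfrac{n\epsilon^2}{4}$, so the exponent is at most $-\tfrac{n\epsilon^2}{4}$, which finishes the proof.

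An equivalent route is to cite the Laurent--Massart bound $\mathbb{P}(Z \geq n + 2\sqrt{nt} + 2t) \leq e^{-t}$, take $t = n\epsilon^2/4$, and verify $2\sqrt{nt}+2t = n\epsilon + \tfrac{n\epsilon^2}{2} \leq \tfrac{n\epsilon}{1-\epsilon}$, which is immediate from $\tfrac{1}{1-\epsilon}\geq 1+\epsilon\geq 1+\tfrac{\epsilon}{2}$. I do not anticipate a real obstacle: this is a routine concentration estimate. The points requiring care are merely bookkeeping — keeping the inequality oriented as an \emph{upper} tail on $\|e\|^2$, recording the admissible range $0<\epsilon<1$ (so that $\lambda=\epsilon/2\in(0,1/2)$ and the logarithmic series converge), and noting that the interpretation $\mathcal{N}(0,\sigma)$ has variance $\sigma^2$, consistent with the $\sigma^2\,n/(1-\epsilon)$ appearing in the statement.
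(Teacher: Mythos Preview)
Your proposal is correct and matches the paper's own argument essentially line for line: the paper also applies Markov's inequality to $\exp(\tfrac{\lambda}{2}\|e/\sigma\|^2)$, sets (in its parametrization) $\lambda=\epsilon$ --- equivalent to your $\lambda=\epsilon/2$ --- and then invokes the elementary inequality $\tfrac{\epsilon}{1-\epsilon}+\log(1-\epsilon)\geq \tfrac{\epsilon^2}{2}$, which you justify via the series expansion whereas the paper simply asserts it. Your alternative Laurent--Massart route is also valid but is not the one the paper takes.
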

	From Lemma~\ref{lemma:ballcovering}, Lemma~\ref{lemma:ebound} and using the fact that 
	$d > n$ and $C \leq C_0$, the corollary below follows:
	\begin{cor}
		\label{cor: concen bound}
		\begin{align*}
		\mathbb{P}\Big(\| \bar{\beta} - \hat{\beta} \| \geq C \| e \| \ | \ \bar{\beta} \neq \hat{\beta}\Big) \geq 1 - \exp\Big(-\frac{\epsilon^2 n}{4}\Big)\ .
		\end{align*}
	\end{cor}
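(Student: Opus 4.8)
The plan is to read the corollary as a deterministic implication wrapped around a single Gaussian concentration estimate. Concretely, I would first rewrite the bad event $\{\|\bar{\beta}-\hat{\beta}\|\ge C\|e\|\}$ so that it depends only on $\|e\|$, and then control $\|e\|$ with Lemma~\ref{lemma:ebound}. For the first part, condition on $\bar{\beta}\neq\hat{\beta}$; the contrapositive of Lemma~\ref{lemma:ballcovering} then forces $\|\bar{\beta}-\hat{\beta}\| > C_0\sigma\sqrt{d}/\sqrt{1-\epsilon}$. Since $0<C\le C_0$, it follows that on the further event $\{\|e\|^2\le \sigma^2 d/(1-\epsilon)\}$ we have $C\|e\|\le C_0\|e\|\le C_0\sigma\sqrt{d}/\sqrt{1-\epsilon} < \|\bar{\beta}-\hat{\beta}\|$, which gives the deterministic inclusion
\begin{align*}
\{\bar{\beta}\neq\hat{\beta}\}\cap\Big\{\|e\|^2\le\tfrac{\sigma^2 d}{1-\epsilon}\Big\}\ \subseteq\ \{\|\bar{\beta}-\hat{\beta}\|\ge C\|e\|\}\ .
\end{align*}

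For the second part, Lemma~\ref{lemma:ebound} gives $\mathbb{P}(\|e\|^2\le\sigma^2 n/(1-\epsilon))\ge 1-\exp(-\epsilon^2 n/4)$, and since we are in the high-dimensional regime $n<d$ this event is contained in $\{\|e\|^2\le\sigma^2 d/(1-\epsilon)\}$; hence the latter also has probability at least $1-\exp(-\epsilon^2 n/4)$. Combining this with the inclusion above and normalizing by $\mathbb{P}(\bar{\beta}\neq\hat{\beta})$ yields the claimed conditional bound.

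The one place that needs care is precisely that last normalization step: because $\hat{\beta}$ is produced from data that contains the realization of $e$, the conditioning event $\{\bar{\beta}\neq\hat{\beta}\}$ is not independent of $\|e\|$, so one cannot simply substitute the unconditional probability of $\{\|e\|^2\le\sigma^2 d/(1-\epsilon)\}$ for its conditional counterpart. I would handle this by carrying the argument at the level of joint probabilities, $\mathbb{P}(A\cap B)\ge\mathbb{P}(B)-\mathbb{P}(A^c)\ge\mathbb{P}(B)-\exp(-\epsilon^2 n/4)$ with $A=\{\|e\|^2\le\sigma^2 d/(1-\epsilon)\}$ and $B=\{\bar{\beta}\neq\hat{\beta}\}$, which is exactly the quantity that later gets combined with the Fano lower bound on $\mathbb{P}(B)$ in the proof of Theorem~\ref{thm:noisymainresult}. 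In that form only the unconditional tail bound of Lemma~\ref{lemma:ebound} is used, and the displayed conditional statement follows by dividing through by $\mathbb{P}(B)$. Everything else is a short chain of inequalities driven solely by $C\le C_0$ and $n<d$, so no genuine obstacle remains beyond this bookkeeping.
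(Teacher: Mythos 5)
Your main chain is exactly the derivation the paper intends: condition on $\bar{\beta}\neq\hat{\beta}$, invoke the separation guaranteed by Lemma~\ref{lemma:ballcovering}, use $C\le C_0$ and $n<d$ to compare $C\|e\|$ with $C_0\sigma\sqrt{d}/\sqrt{1-\epsilon}$, and control $\|e\|$ via Lemma~\ref{lemma:ebound}. You have also correctly spotted the subtlety the paper glosses over: $\hat{\beta}$ is a function of data that contains $e$, so the event $\{\bar{\beta}\neq\hat{\beta}\}$ is not independent of $\|e\|$, and the unconditional tail bound of Lemma~\ref{lemma:ebound} cannot simply be used inside the conditional probability.

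The genuine gap is in your final bookkeeping step. From $\mathbb{P}(A\cap B)\ge \mathbb{P}(B)-\mathbb{P}(A^c)\ge \mathbb{P}(B)-\exp(-\epsilon^2 n/4)$ with $A=\{\|e\|^2\le \sigma^2 d/(1-\epsilon)\}$ and $B=\{\bar{\beta}\neq\hat{\beta}\}$, dividing by $\mathbb{P}(B)$ gives $\mathbb{P}(A\mid B)\ge 1-\exp(-\epsilon^2 n/4)/\mathbb{P}(B)$, which is strictly weaker than the stated corollary whenever $\mathbb{P}(B)<1$; the displayed conditional inequality does \emph{not} follow by ``dividing through by $\mathbb{P}(B)$.'' This is not merely cosmetic: if you instead carry your joint bound into the proof of Theorem~\ref{thm:noisymainresult}, the final estimate becomes $\mathbb{P}(\|\bar{\beta}-\hat{\beta}\|\ge C\|e\|)\ge \mathbb{P}(B)-\exp(-\epsilon^2 n/4)\ge \tfrac12-\exp(-\epsilon^2 n/4)$ rather than the paper's product $\mathbb{P}(B)\,(1-\exp(-\epsilon^2 n/4))$, and since $\epsilon<1$ the subtractive form is negative for $n=1$ (indeed for any $n\le 3$), so the constant $\tfrac{1}{10}$ cannot be recovered along your route without changing the ensemble separation or the constants. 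So: your diagnosis of the dependence issue is valid (and it is a weakness of the paper's own one-line justification of Corollary~\ref{cor: concen bound}), but your proposed repair proves a weaker joint-probability statement and then mislabels it as the conditional bound; as written the proof of the corollary as stated is not complete.
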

	\noindent
	\paragraph{Bound on the mutual information}  We will assume that the elements of design matrix $X$ have been chosen at random and independently from $\mathcal{N}(0, 1)$. The linear prediction problem  from Section~\ref{sec:model} can be described by the following Markov's chain:
	\begin{align}
	\label{eq:markov}
	\bar{\beta} \rightarrow y = X \bar{\beta} + e \rightarrow z = f (y ) \rightarrow \hat{\beta} \ .
	\end{align} 
	\noindent
	Lets say $\mathcal{S}$ contains $n$ i.i.d. observations of $z$ and $\mathcal{S}'$ contains $n$ i.i.d. observations of $y$. Then using the data processing inequality~\cite{Cover06} we can say that,
	\begin{align}
	\label{eq:data processing}
	\mathbb{I}(\bar{\beta}, \mathcal{S}) \leq \mathbb{I}(\bar{\beta}, \mathcal{S}')\ .
	\end{align}
	\noindent 
	Hence, for our purpose it suffices to have an upper bound on $\mathbb{I}(\bar{\beta}, \mathcal{S}')$. Now we can bound the mutual information by the following~\cite{Yu97}:
	\begin{align}
	\mathbb{I}(\bar{\beta}, \mathcal{S}') \leq \frac{1}{|\mathcal{F}|^2} \sum_{\beta \in \mathcal{F}}^{} \sum_{\beta' \in \mathcal{F}}^{} \mathbb{KL}(\mathbb{P}_{\mathcal{S}'|\beta} \| \mathbb{P}_{\mathcal{S}'|\beta'} )\ ,
	\end{align}
	where $\mathbb{KL}$ is the Kullback-Leibler divergence. Note that $\mathcal{S}'$ consists of $n$ i.i.d. observations of $y$. Hence,
	\begin{align}
	\label{eq:bound info}
	\mathbb{I}(\bar{\beta}, \mathcal{S}') \leq \frac{n}{|\mathcal{F}|^2} \sum_{\beta \in \mathcal{F}}^{} \sum_{\beta' \in \mathcal{F}}^{} \mathbb{KL}(\mathbb{P}_{y_i|\beta} \| \mathbb{P}_{y_i|\beta'} )
	\end{align}
	\noindent
	Furthermore, from equation~\eqref{eq:markov} and noting that the elements of $X$ come independently from $ \mathcal{N}(0, 1)$,
	\begin{align*}
	&y_i =  X_i \beta + e_i\\
	&y_i|\beta \sim \mathcal{N}(0 , \| \beta \|^2 + \sigma^2 )  \\
	&y_i|\beta' \sim \mathcal{N}(0 , \| \beta' \|^2 + \sigma^2 )\ .
	\end{align*}
	We can bound the Kullback-Leibler divergence between $\mathbb{P}_{y_i|\beta}$ and $\mathbb{P}_{y_i|\beta'}$ as follows:
	\begin{align*}
	\mathbb{KL}(\mathbb{P}_{y_i|\beta} \| \mathbb{P}_{y_i|\beta'}) &= \frac{1}{2}\left(\frac{\|\beta \|^2 + \sigma^2}{\|\beta' \|^2 + \sigma^2} - 1 \right. \\
	&\left. - \log \frac{\|\beta \|^2 + \sigma^2}{\|\beta' \|^2 + \sigma^2} \right) \\
	&\leq \frac{1}{2}\Big(\frac{\|\beta \|^2 + \sigma^2}{\|\beta' \|^2 + \sigma^2} - 1 - 1 + \frac{\|\beta' \|^2 + \sigma^2}{\|\beta \|^2 + \sigma^2} \Big) \\
	&\leq \frac{1}{2}\left(\frac{(\frac{C_0\sigma \sqrt{d}}{\sqrt{2(1-\epsilon)}} + \frac{C_0\sigma\sqrt{d}}{\sqrt{(1-\epsilon)}})^2 s  + \sigma^2}{(\frac{C_0\sigma\sqrt{d}}{\sqrt{2(1-\epsilon)}})^2 s + \sigma^2} \right. \\
	&\left. + \frac{(\frac{C_0\sigma\sqrt{d}}{\sqrt{2(1-\epsilon)}} + \frac{C_0\sigma\sqrt{d}}{\sqrt{(1-\epsilon)}} )^2 s + \sigma^2}{(\frac{C_0\sigma\sqrt{d}}{\sqrt{2(1-\epsilon)}} )^2 s + \sigma^2} - 2 \right) \\
	&\leq \frac{1}{2}\Big(2 \frac{(\frac{C_0\sigma\sqrt{d}}{\sqrt{2(1-\epsilon)}} + \frac{C_0\sigma\sqrt{d}}{\sqrt{1-\epsilon}})^2 s  + \sigma^2}{(\frac{C_0\sigma\sqrt{d}}{\sqrt{2(1-\epsilon)}})^2 s + \sigma^2} - 2 \Big) \\
	&\leq \frac{1}{2}\Big(2 \frac{(\sqrt{2}+1)^2(\frac{C_0\sigma\sqrt{d}}{\sqrt{2(1-\epsilon)}})^2 s  + \sigma^2}{(\frac{C_0\sigma\sqrt{d}}{\sqrt{2(1-\epsilon)}})^2 s + \sigma^2} - 2 \Big) \\
	&\leq (\sqrt{2}+1)^2 - 1 \\
	&\leq 5 \ .
	\end{align*}
	The first inequality holds because $1 - \frac{1}{x} \leq \log x, \forall x > 0$, the second inequality holds by taking the largest value of numerators and the smallest value of denominators. The other inequalities follow from simple algebraic manipulation. Substituting $\mathbb{KL}(\mathbb{P}_{y_i|\beta} \| \mathbb{P}_{y_i|\beta'})$ in equation~\eqref{eq:bound info} we get,
	\begin{align}
	\label{eq:mutual info bound}
	\mathbb{I}(\bar{\beta}, \mathcal{S}') \leq  5 n \ .
	\end{align} 

\noindent
\paragraph{Bound on $|\mathcal{F}|$} Now we will count elements in $\mathcal{F}$ to complete our proof. We present the following counting argument to establish a lower bound on all the possible supports for our restricted ensemble:
\begin{enumerate}
	\item We choose one node from each of the $g$ groups in underlying graph $G$ to be root of a connected component. Each group has $\frac{d}{g}$ possible candidates for the root and hence we can choose them in $(\frac{d}{g})^g$ possible ways.
	\item Since we are interested only in establishing a lower bound on $\mathcal{F}$, we will only consider the cases where each connected component has $\frac{s}{g}$ nodes. Moreover, given a root node $N_i^j$ in group $j$, we will choose the remaining $\frac{s}{g} - 1$ nodes connected with the root only from the nodes $N_{i+1}^j$ to nodes $N_{i+\frac{B\rho(G)}{2(s - g)}}^j$ (using circular indices if needed). Construction of the graph $G$ allows us to do this. At least till the last $\frac{\rho(G)B}{2(s - g)}$ nodes, we always include node $N_i^j$ and we never include $N_r^j, r \leq i-1$ in our selection. Furthermore, R1 guarantees that we have enough nodes to avoid any possible repetitions due to circular indices for the last $\frac{\rho(G)B}{2(s -g)}$ nodes and R2 ensures that we have enough nodes to form a connected component. This guarantees that all the supports are unique. Hence, given a root node $N_i^j$ we have ${\frac{\rho(G)B}{2(s-g)} \choose \frac{s}{g} -1} $ choices which across all the groups comes out to be $({\frac{\rho(G)B}{2(s - g)} \choose \frac{s}{g} -1 })^g$.     
	\item Each entry in the support of $\beta$ can take two values which can either be $\frac{C_0\sigma \sqrt{d}}{\sqrt{2(1 - \epsilon)}}$ or $\frac{C_0\sigma \sqrt{d}}{\sqrt{2(1 - \epsilon)}} + \frac{C_0\sigma \sqrt{d}}{\sqrt{(1 - \epsilon)}} $. 
\end{enumerate}
It should be noted that any support chosen using the above steps satisfies constraint on weight budget, i.e., $w(F) \leq B$ as the maximum edge weight in any connected component will always be less than or equal to $\frac{B}{s-g}$. Combining all the above steps together we get:
\begin{align}
\label{eq:bound on F}
\begin{split}
|\mathcal{F}| &\geq 2^s (\frac{d}{g})^g ({\frac{\rho(G)B}{2(s - g)} \choose \frac{s}{g} -1 })^g \\
&\geq 2^s (\frac{d}{g})^g (\frac{\rho(G)Bg}{2(s-g)^2})^{(s-g)} \ . 
\end{split}
\end{align}
Using Fano's inequality~\cite{Cover06} and results from equation~\eqref{eq:mutual info bound} and equation~\eqref{eq:bound on F}, it is easy to prove the following lemma,
\begin{lemma}
	\label{lemma:fano}
	If $n \in o(\log |\mathcal{F}|)$ then $\mathbb{P}(\hat{\beta} \neq \bar{\beta}) \geq \frac{1}{2}$. 
\end{lemma}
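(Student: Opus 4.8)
The plan is to apply Fano's inequality~\cite{Cover06} to the uniform prior generating $\bar{\beta}$, using the two ingredients already in hand: the mutual information bound~\eqref{eq:mutual info bound} and the cardinality bound~\eqref{eq:bound on F}. Since $\bar{\beta}$ is uniform on $\mathcal{F}$ we have $H(\bar{\beta}) = \log|\mathcal{F}|$, and since any inference procedure makes $\hat{\beta}$ a (possibly randomized) function of $\mathcal{S}$, the chain $\bar{\beta}\rightarrow\mathcal{S}\rightarrow\hat{\beta}$ holds, so Fano's inequality applies with $|\mathcal{F}|$ competing hypotheses.

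First I would write Fano's inequality as
\begin{align*}
\mathbb{P}(\hat{\beta}\neq\bar{\beta}) \geq 1 - \frac{\mathbb{I}(\bar{\beta},\mathcal{S}) + \log 2}{\log|\mathcal{F}|}\ .
\end{align*}
Next I would bound the numerator: the data-processing step~\eqref{eq:data processing} combined with~\eqref{eq:mutual info bound} gives $\mathbb{I}(\bar{\beta},\mathcal{S}) \leq \mathbb{I}(\bar{\beta},\mathcal{S}') \leq 5n$, so
\begin{align*}
\mathbb{P}(\hat{\beta}\neq\bar{\beta}) \geq 1 - \frac{5n + \log 2}{\log|\mathcal{F}|}\ .
\end{align*}
Then I would invoke the hypothesis $n\in o(\log|\mathcal{F}|)$: since~\eqref{eq:bound on F} forces $\log|\mathcal{F}|\rightarrow\infty$ as the model parameters grow, the subtracted fraction tends to $0$; concretely, as soon as $10n + 2\log 2 \leq \log|\mathcal{F}|$ it is at most $\frac{1}{2}$, which yields $\mathbb{P}(\hat{\beta}\neq\bar{\beta}) \geq \frac{1}{2}$.

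The one place where I would be careful — and the step I expect to be the only mild obstacle, though it is still routine algebra — is verifying that the sample-complexity hypothesis of Theorem~\ref{thm:noisymainresult} really delivers $n\in o(\log|\mathcal{F}|)$. Taking logarithms in~\eqref{eq:bound on F} and regrouping, $\log|\mathcal{F}| \geq (s-g)(\log\rho(G)+\log\frac{B}{s-g}) + g\log\frac{d}{g} + (s-g)\log\frac{g}{s-g} + s\log 2 - (s-g)\log 2$. Requirement R2 says exactly that $\frac{\rho(G)Bg}{2(s-g)^2}\geq 1$, i.e. $(s-g)\big(\log\rho(G)+\log\frac{B}{s-g}+\log\frac{g}{s-g}\big)\geq(s-g)\log 2$, so the first three groups of terms dominate the stray $(s-g)\log 2$ and $\log|\mathcal{F}|$ is within a constant factor of the expression appearing in Theorem~\ref{thm:noisymainresult}. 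Hence $n$ in $o$ of that expression lies in $o(\log|\mathcal{F}|)$ and the lemma applies. I do not expect any deeper difficulty: the real content of the argument is in the earlier mutual-information and counting estimates, and this lemma merely assembles them through Fano's inequality.
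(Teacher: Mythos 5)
Your proposal is correct and follows essentially the same route as the paper: Fano's inequality combined with the data-processing step~\eqref{eq:data processing} and the bound $\mathbb{I}(\bar{\beta},\mathcal{S}')\leq 5n$ from~\eqref{eq:mutual info bound}, then observing that $n\in o(\log|\mathcal{F}|)$ makes the error term at least $\tfrac{1}{2}$ (the paper phrases this as the threshold $n\geq \tfrac{1}{10}\log|\mathcal{F}| - \tfrac{1}{5}\log 2$). Your extra check relating $\log|\mathcal{F}|$ to the sample-complexity expression via R2 is fine but belongs to the proof of Theorem~\ref{thm:noisymainresult} rather than to this lemma.
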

By using Bayes' Theorem and combining Corollary~\ref{cor: concen bound} and Lemma~\ref{lemma:fano},
\begin{align}
\label{eq:bound on error}
\begin{split}
\mathbb{P}\Big(\| \bar{\beta} - \hat{\beta} \| \geq C\|e\|\Big) &\geq \mathbb{P}\Big(\| \bar{\beta} - \hat{\beta} \| \geq C\|e\|, \bar{\beta} \neq \hat{\beta}\Big) \\
&= \mathbb{P}\Big(\| \bar{\beta} - \hat{\beta} \| \geq C\|e\| \ | \ \bar{\beta} \neq \hat{\beta}\Big) \\
&\mathbb{P}\Big(\bar{\beta} \neq \hat{\beta}\Big)\\
&\geq \Big(1 - \exp\Big(-\frac{\epsilon^2 n}{4}\Big)\Big) \frac{1}{2} \ . 
\end{split}
\end{align}
The last inequality~\eqref{eq:bound on error} holds when $n$ is $o(\log |\mathcal{F}|)$. We also know that $n \geq 1$ and if we choose $\epsilon \geq \sqrt{-4\log 0.8} \sim 0.9448$, then we can write inequality~\eqref{eq:bound on error} as,
\begin{align}
\label{eq: final res}
\mathbb{P}(\| \bar{\beta} - \hat{\beta} \| \geq C\|e\|) &\geq \frac{1}{10} \ .
\end{align}
\end{proof}

\section{Specific Examples}
\label{sec:specific example}
Here, we will provide counting arguments for some of the well-known sparsity structures, such as tree sparsity and block sparsity models. It should be noted that barring the count of possible supports in the specific model our technique can be used to prove lower bounds of the sample complexity for other sparsity structures. 

\subsection{Tree-structured sparsity model}
\label{subsec:tree sparsity}
The tree-sparsity model~\cite{baraniuk2010model},~\cite{hegde2014fast} is used in many applications such as wavelet decomposition of piecewise smooth signals and images. In this model, we assume that the coefficients of the $s-$sparse signal form a $k-$ary tree and the support of the sparse signal form a rooted and connected sub-tree on $s$ nodes in this $k-$ary tree. The arrangement is such that if a node is part of this subtree then its parent is also included in it. Here, we will discuss the case of a binary tree which can be generalized to a $k-$ary tree. In particular, the following proposition provides a lower bound on the number of possible supports of an $s-$sparse signal following a binary tree-structured sparsity model.
\begin{prop}
\label{prop:tree sparse}
In a binary tree-structured sparsity model $\mathcal{F}$, $\log |\mathcal{F}| \geq cs$ for some $c > 0$. 
\end{prop}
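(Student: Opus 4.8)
The plan is to identify the supports counted by $\mathcal{F}$ with rooted connected subtrees of the underlying binary tree and then count a conveniently chosen sub-family of them. Recall that a valid support is a set $S$ of $s$ vertices that is connected, contains the root of the $k$-ary (here $k=2$) tree, and is closed under taking parents. First I would observe that when the ambient tree is deep enough, such supports are in bijection with the shapes of binary trees on $s$ nodes: for each included node one independently records whether its left child and its right child are included, which is precisely the recursive description counted by the Catalan number $C_s = \frac{1}{s+1}\binom{2s}{s}$. Hence $|\mathcal{F}| \geq C_s$ (and, if one also records the nonzero values, as in the restricted ensemble of \eqref{eq:F noisy}, an extra factor of at least $2^s$).

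Second, I would lower-bound $C_s$ using the standard central-binomial estimate $\binom{2s}{s} \geq \frac{4^s}{2s+1}$, which gives $C_s \geq \frac{4^s}{(s+1)(2s+1)}$ and therefore $\log|\mathcal{F}| \geq 2 s \log 2 - \log(s+1) - \log(2s+1)$. For every $s$ larger than an absolute constant the right-hand side exceeds, say, $s\log 2$, so the claim holds with $c = \log 2$ (indeed any $c<2\log 2$ works for $s$ large enough). The finitely many small values of $s$ are absorbed into $c$, or handled directly since $|\mathcal{F}|\geq 2$ already for $s\geq 2$.

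The step that needs the most care is that the actual $k$-ary tree has only $d$ nodes and hence depth $\Theta(\log d)$, whereas a size-$s$ subtree can in principle require depth up to $s-1$; so the clean Catalan count is literally valid only when $d$ is extremely large. To make the bound robust in the usual regime $s \ll d$, I would instead exhibit an explicit sub-family that fits inside depth $O(\log d)$: fix a collection of $\Theta(s/\log d)$ leaves of the complete binary tree whose root-to-leaf paths are pairwise almost disjoint (they branch apart already near the root), include all their ancestors, and pad to exactly $s$ nodes with a controlled number of pendant children. The number of such choices is at least $\bigl(\Theta(d/t)\bigr)^{t}$ with $t=\Theta(s/\log d)$, i.e. $2^{\Omega(s)}$ since $\tfrac{s}{\log d}\cdot\log d=\Theta(s)$, and this again yields $\log|\mathcal{F}| \geq cs$. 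Either route — the Catalan count for deep trees, the pendant-leaf count for shallow ones — proves the proposition, the remaining work being the routine binomial arithmetic above; combined with the mutual-information bound $\mathbb{I}(\bar\beta,\mathcal{S}') \leq 5n$ and Lemma~\ref{lemma:fano} it yields an $\Omega(s)$ sample-complexity lower bound for the tree-sparsity model.
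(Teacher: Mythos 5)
Your conclusion is right, but note that the paper's own proof is the argument you mention only parenthetically: fix any single valid support of size $s$ (e.g., the first $s$ nodes in breadth-first order, which is connected and parent-closed), and observe that each of the $s$ support entries of $\bar{\beta}$ in the restricted ensemble can take either of two nonzero values, so $|\mathcal{F}| \geq 2^s$ and $\log|\mathcal{F}| \geq s\log 2$ — that is the entire proof. Your main route instead lower-bounds the number of distinct \emph{supports}, which is genuinely different and more informative (it shows the combinatorial structure alone carries $\Omega(s)$ bits when the tree is deep), but it is not uniform over all regimes: the Catalan count requires ambient depth $\geq s-1$, i.e.\ $d \geq 2^{\Omega(s)}$, and in your pendant-leaf construction the step ``$t\log(d/t)=\Theta(s)$ with $t=\Theta(s/\log d)$'' silently assumes $\log(d/t)=\Theta(\log d)$, which holds for $s \leq d^{1-\delta}$ with the constant $c$ depending on $\delta$, but degrades as $s$ approaches $d$; indeed for $s = d - O(1)$ the number of rooted, parent-closed supports is only polynomial in $d$, so support counting alone cannot yield $\log|\mathcal{F}| \geq cs$ with a universal constant. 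The $2^s$ value-assignment factor from the restricted ensemble of equation~\eqref{eq:F noisy} — your parenthetical — is precisely what closes that gap and is what the paper relies on; with it your argument is complete, and your support-count refinement is a valid bonus in the usual regime $s \ll d$, feeding into Lemma~\ref{lemma:fano} exactly as you describe.
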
   
The proof of the proposition~\ref{prop:tree sparse} follows from the fact that we have at least $2^s$ different choices of $\bar{\beta}$ in our restricted ensemble.   
From the above and following the same proof technique as before, it is easy to prove the following corollary for the noisy case (a similar result holds for the noiseless case as well).
\begin{cor}
\label{cor:tree sparse}
In a binary tree-structured sparsity model, if $n \in o(s)$ then $\mathbb{P}(\| \bar{\beta} - \hat{\beta}  \| \geq C \| e \|) \geq \frac{1}{10}$.
\end{cor}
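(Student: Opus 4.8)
The plan is to replay the three-step argument from the proof of Theorem~\ref{thm:noisymainresult}, simply substituting the tree-sparsity support count of Proposition~\ref{prop:tree sparse} for the WGM count in equation~\eqref{eq:bound on F}. Concretely, I would define the restricted ensemble exactly as in equation~\eqref{eq:F noisy}, except that $\mathbb{M}$ is now taken to be the set of supports of the binary tree-structured sparsity model, i.e.\ the rooted connected subtrees on $s$ nodes of the ambient binary tree on $[d]$; each of the $s$ support coordinates of $\beta$ independently takes one of the two values $\frac{C_0\sigma\sqrt{d}}{\sqrt{2(1-\epsilon)}}$ or $\frac{C_0\sigma\sqrt{d}}{\sqrt{2(1-\epsilon)}}+\frac{C_0\sigma\sqrt{d}}{\sqrt{1-\epsilon}}$, and $\bar{\beta}$ is drawn uniformly from $\mathcal{F}$.

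The mutual-information bound carries over verbatim. Since $X$ has i.i.d.\ $\mathcal{N}(0,1)$ entries we still have $y_i\mid\beta\sim\mathcal{N}(0,\|\beta\|^2+\sigma^2)$, and every $\beta\in\mathcal{F}$ has $\|\beta\|^2$ squeezed between $(\tfrac{C_0\sigma\sqrt{d}}{\sqrt{2(1-\epsilon)}})^2 s$ and $(\tfrac{C_0\sigma\sqrt{d}}{\sqrt{2(1-\epsilon)}}+\tfrac{C_0\sigma\sqrt{d}}{\sqrt{1-\epsilon}})^2 s$, so the same chain of inequalities gives $\mathbb{KL}(\mathbb{P}_{y_i\mid\beta}\|\mathbb{P}_{y_i\mid\beta'})\le 5$ and hence $\mathbb{I}(\bar{\beta},\mathcal{S})\le\mathbb{I}(\bar{\beta},\mathcal{S}')\le 5n$ by the data-processing inequality and the i.i.d.\ structure of $\mathcal{S}'$. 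Lemmas~\ref{lemma:ballcovering} and~\ref{lemma:ebound} and Corollary~\ref{cor: concen bound} also transfer unchanged, because they only use the two prescribed coefficient magnitudes and the assumption $n<d$, not the shape of the supports.

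For the counting step I would invoke Proposition~\ref{prop:tree sparse}: as long as $d\ge s$ the ambient binary tree contains at least one rooted connected subtree on $s$ nodes (for instance the first $s$ nodes taken in breadth-first order from the root, since each such node's parent has a smaller index and is therefore also included), so $\mathbb{M}\neq\emptyset$; combined with the $2^s$ independent choices of the nonzero magnitudes this yields $|\mathcal{F}|\ge 2^s$ and thus $\log|\mathcal{F}|\ge s\log 2 = cs$ with $c=\log 2$. Feeding $\mathbb{I}(\bar{\beta},\mathcal{S})\le 5n$ and $\log|\mathcal{F}|\ge cs$ into Fano's inequality exactly as in Lemma~\ref{lemma:fano} shows that whenever $n\in o(s)$ we have $n\in o(\log|\mathcal{F}|)$ and therefore $\mathbb{P}(\hat{\beta}\neq\bar{\beta})\ge\tfrac12$. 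Finally, combining this with Corollary~\ref{cor: concen bound} through Bayes' theorem as in equation~\eqref{eq:bound on error}, and choosing $\epsilon\ge\sqrt{-4\log 0.8}$, yields $\mathbb{P}(\|\bar{\beta}-\hat{\beta}\|\ge C\|e\|)\ge\tfrac{1}{10}$ for $0<C\le C_0$.

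The only genuinely new ingredient is the support count, so the main obstacle is simply verifying Proposition~\ref{prop:tree sparse} — namely that $\mathbb{M}$ is nonempty and that the tree model places no constraint on the values assigned to the support entries — both of which are immediate here; everything else is a transcription of the WGM proof. One could sharpen $\log|\mathcal{F}|\ge cs$ by counting the actual number of size-$s$ rooted subtrees of the binary tree, but the crude bound $|\mathcal{F}|\ge 2^s$ already suffices for the $o(s)$ threshold asserted in the corollary, and an analogous argument (dropping the noise terms and applying Lemma~\ref{lemma:ballcovering} with $e=0$) gives the corresponding noiseless statement.
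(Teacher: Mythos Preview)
Your proposal is correct and follows essentially the same approach as the paper: the paper states that Corollary~\ref{cor:tree sparse} follows from Proposition~\ref{prop:tree sparse} by ``following the same proof technique as before,'' i.e.\ rerunning the three-step argument of Theorem~\ref{thm:noisymainresult} with the tree-sparsity count $\log|\mathcal{F}|\ge cs$ (coming from the $2^s$ choices of nonzero magnitudes) replacing the WGM count. Your detailed transcription of the mutual-information bound, Lemmas~\ref{lemma:ballcovering}--\ref{lemma:ebound}, Corollary~\ref{cor: concen bound}, and the Fano step is exactly what the paper intends.
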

Essentially, Corollary~\ref{cor:tree sparse} proves that the $O(s)$ sample complexity achieved in~\cite{hegde2015nearly} is optimal for the tree-sparsity model.

\subsection{Block sparsity model}
\label{subsec: block sparse}
In the block sparsity model,~\cite{baraniuk2010model}, an $s-$sparse signal, $\beta \in \mathbb{R}^{J \times N}$, can be represented as a matrix with $J$ rows and $N$ columns. The support of $\beta$ comes from $K$ columns of this matrix such that $s = J K$. More precisely, 
\begin{defn}[Definition 11 in~\cite{baraniuk2010model}]
	\begin{align*}
	\mathcal{S}_K = \left\lbrace \beta = [ \beta_1 \dots \beta_N ]\in \mathbb{R}^{J \times N} \ such\ that\ \right. \\
	\left. \beta_n = 0\ for\ n \notin L,\ L \subseteq \{1,\dots, N\},\ |L|=K \right\rbrace \ .
	\end{align*}
\end{defn} 

The above can be modeled as a graph model. In particular, we can construct a graph $G$ over all the elements in $\beta$ by treating nodes in the column of the matrix as connected nodes (see Fig.~\ref{figblock}) and then our problem is to choose $K$ connected components from $N$.    

\begin{figure}[!t]
	\centering
	\includegraphics[width=1in]{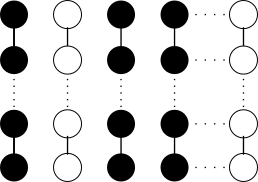}
	\caption{Block sparsity structure as a graph model: nodes are variables, black nodes are selected variables}
	\label{figblock}
\end{figure}

It is easy to see that the number of possible supports in this model, $\mathcal{F}$, would be, $|\mathcal{F}| = 2^{KJ} {N \choose K} \geq 2^{KJ} (\frac{N}{K})^K$. Correspondingly the necessary number of samples for efficient signal recovery comes out to be $\Omega(KJ + K \log \frac{N}{K})$. An upper bound of $O(KJ + K \log \frac{N}{K})$ was derived in~\cite{baraniuk2010model} which matches our lower bound.

\section{Concluding Remarks}
\label{sec:conclusion}
We proved that the necessary number of samples required to efficiently recover a sparse vector in the weighted graph model is of the same order as the sufficient number of samples provided by Hegde and others~\cite{hegde2015nearly}. Moreover, our results not only pertain to linear regression but also apply to linear prediction problems in general.

\vspace{10mm}
\section{APPENDIX}
\section*{Proof of Lemma~\ref{lemma:ballcovering}}
\begin{proof}
	First note that when $\bar{\beta} = \hat{\beta}$ then its obvious that Lemma~\ref{lemma:ballcovering} holds. Here, we will prove that two arbitrarily chosen $\beta_1$ and $\beta_2$ such that $\beta_1, \beta_2 \in \mathcal{F}$ where $\beta_1 \neq \beta_2$ then $\| \beta_1 - \beta_2 \| \geq \frac{C_0\sigma\sqrt{d}}{\sqrt{1-\epsilon}}$. $\mathcal{F}$ is as defined in equation~\eqref{eq:F noisy}.
	\noindent
	\paragraph{$\beta_1$ and $\beta_2$ have the same support} Since we assume that $\beta_1 \neq \beta_2$, thus they must differ in at least one position on their support. Lets say that one such position is $i$. Then,   
	\begin{align*}
	\|\beta_1 - \beta_2\| &\geq |\beta_{1i} - \beta_{2i}| \\
	&=\frac{C_0\sigma \sqrt{d}}{\sqrt{1 - \epsilon}} \ .
	\end{align*}
	\paragraph{$\beta_1$ and $\beta_2$ have different supports} When $\beta_1$ and $\beta_2$ have different supports then we can always find $i$ and $j$ such that $i \in S_1, i \notin S_2$ and $j \notin S_1, j \in S_2$ where $S_1$ and $S_2$ are supports of $\beta_1$ and $\beta_2$ respectively. Then, 
	\begin{align*}
	\| \beta_1 - \beta_2 \| &\geq \sqrt{\beta_{1i}^2 + \beta_{2j}^2 } \\
	&\geq \sqrt{(\frac{C_0\sigma \sqrt{d}}{\sqrt{2(1 - \epsilon)}} )^2+ (\frac{C_0\sigma \sqrt{d}}{\sqrt{2(1 - \epsilon)} })^2 }\\
	&=\frac{C_0\sigma \sqrt{d}}{\sqrt{1 - \epsilon}} \ .
	\end{align*}
	Since this is true for any two arbitrarily chosen $\beta_1$ and $\beta_2$, hence it holds for $\bar{\beta}$ and $\hat{\beta}$ as well. This proves the lemma.
\end{proof}

\section*{Proof of Lemma~\ref{lemma:ebound}}
\begin{proof}
	\begin{align*}
	\mathbb{P}\Big( \| e \|^2 \geq \sigma^2 \frac{n}{1-\epsilon} \Big) &= \mathbb{P}\Big(\exp\big(\frac{\lambda}{2} \| \frac{e}{\sigma} \|^2\big) \geq \exp\big(\frac{\lambda}{2}  \frac{n}{1-\epsilon}\big)\Big)\\
	& \leq \frac{\mathbb{E}\Big[\exp\big(\frac{\lambda}{2} \|\frac{e}{\sigma} \|^2\big)\Big]}{\exp\big(\frac{\lambda}{2} \frac{n}{1-\epsilon}\big)}\\
	&=\exp\Big(\frac{-\lambda}{2} \frac{n}{1 -\epsilon}\Big) \Big(\frac{1}{1-\lambda}\Big)^{\frac{n}{2}} \ .
	\end{align*} 
	\noindent
	The first equality holds for any $\lambda > 0$, we take $0 < \lambda < 1$ . The second inequality comes from Markov's inequality. The last equality follows since $\frac{e_i}{\sigma} \stackrel{iid}{\sim} \mathcal{N}(0, 1)$. Now, by taking $\lambda = \epsilon$,
	\begin{align*}
	\mathbb{P}\Big(\| e \|^2 \geq \sigma^2 \frac{n}{1-\epsilon}\Big) &\leq \exp\Big(\frac{-n}{2} \big(\frac{\epsilon}{1 -\epsilon} + \log (1-\epsilon)\big)\Big)\\
	&\leq \exp\Big(-\frac{\epsilon^2 n}{4}\Big) \ .
	\end{align*}
	\noindent
	The last inequality holds because for $0 < \epsilon < 1$, $\frac{\epsilon}{1 -\epsilon} + \log (1-\epsilon) \geq \frac{\epsilon^2}{2}$. This proves our lemma,
	\begin{align*} 
	\mathbb{P}\Big(\| e \|^2 \leq \sigma^2 \frac{n}{1-\epsilon}\Big) &\geq 1 - \exp\Big(-\frac{\epsilon^2 n}{4}\Big) \ .
	\end{align*}
\end{proof}

\section*{Proof of Lemma~\ref{lemma:fano}}
\begin{proof}
	Using Fano's inequality~\cite{Cover06}, we can say that,
	\begin{align*}
	\mathbb{P}(\hat{\beta} \neq \bar{\beta}) &\geq 1 - \frac{\mathbb{I}(\bar{\beta}, \mathcal{S}) + \log 2}{\log |\mathcal{F}|} \\
	&\geq 1 - \frac{\mathbb{I}(\bar{\beta}, \mathcal{S}') + \log 2}{\log |\mathcal{F}|} \\
	&\geq 1 - \frac{5n + \log 2}{\log |\mathcal{F}|} \ .
	\end{align*}
	\noindent
	The first inequality follows from equation~\eqref{eq:data processing} and the second inequality follows from the upper bound on the mutual information established in equation~\eqref{eq:mutual info bound}. Now, we want $\mathbb{P}(\hat{\beta} \neq \bar{\beta}) \leq \frac{1}{2}$, then it follows that $n$ must be,
	\begin{align}
	\label{eq:proof fano}
	n \geq \frac{1}{10} \log |\mathcal{F}| - \frac{1}{5} \log 2 \ .
	\end{align}
	\noindent
	This proves the lemma.
	
\end{proof}

\section*{Proof of Theorem~\ref{thm:mainresult}}
\begin{proof}
\item
\paragraph{Constructing an underlying graph $G$} We assume that our underlying graph $G$ fulfills all the properties mentioned while proving Theorem~\ref{thm:noisymainresult}. On this underlying graph $G$, we define our restricted ensemble $\mathcal{F}$ as:
\begin{align*}
\mathcal{F} &= \left\lbrace \beta_i \in \{ 1, -1\},\ if\ i \in S,\ else\ \beta_i = 0,\ S \in \mathbb{M} \right\rbrace \ ,
\end{align*} 
where $\mathbb{M}$ is as in Definition~\ref{def:WGM}.

\paragraph{Bound on the mutual information} We will assume that the elements of design matrix $X$ have been chosen at random and independently from $\mathcal{N}(\frac{1}{s \sqrt{2}}, \frac{1}{s})$. As in the proof of Theorem~\ref{thm:noisymainresult}, we can describe noiseless linear prediction problem as the following Markov's chain:
\begin{align}
\label{eq:markov noiseless}
\bar{\beta} \rightarrow y = X \bar{\beta} \rightarrow z = f (y ) \rightarrow \hat{\beta} \ .
\end{align} 
\noindent
Lets say $\mathcal{S}$ contains $n$ i.i.d. observations of $z$ and $\mathcal{S}'$ contains $n$ i.i.d. observations of $y$. Then using the data processing inequality~\cite{Cover06}, we can say that,
\begin{align}
\label{eq:data processing noiseless}
\mathbb{I}(\bar{\beta}, \mathcal{S}) \leq \mathbb{I}(\bar{\beta}, \mathcal{S}') \ .
\end{align}
\noindent 
Hence, for our purpose it suffices to have an upper bound on $\mathbb{I}(\bar{\beta}, \mathcal{S}')$. Now using results from~\cite{Yu97}, 
\begin{align*}
\mathbb{I}(\bar{\beta}, \mathcal{S}') \leq \frac{1}{|\mathcal{F}|^2} \sum_{\beta \in \mathcal{F}}^{} \sum_{\beta' \in \mathcal{F}}^{} \mathbb{KL}(\mathbb{P}_{\mathcal{S}'|\beta} \| \mathbb{P}_{\mathcal{S}'|\beta'} ) \ .
\end{align*}
where $\mathbb{KL}$ is the Kullback-Leibler divergence. Note that $\mathcal{S}'$ consists of $n$ i.i.d. observations of $y$. Hence,
\begin{align}
\label{eq:bound info noiseless}
\mathbb{I}(\bar{\beta}, \mathcal{S}') \leq \frac{n}{|\mathcal{F}|^2} \sum_{\beta \in \mathcal{F}}^{} \sum_{\beta' \in \mathcal{F}}^{} \mathbb{KL}(\mathbb{P}_{y_i|\beta} \| \mathbb{P}_{y_i|\beta'} ) \ .
\end{align}
\noindent
Furthermore from equation~\eqref{eq:markov noiseless} and noting that the elements of $X$ come independently from $ \mathcal{N}(\frac{1}{s\sqrt{2}}, \frac{1}{s})$,
\begin{align*}
& y_i =  X_i \beta \\
& y_i|\beta \sim \mathcal{N}(\frac{\sum_{k=1}^{d}\beta_k}{s \sqrt{2}} , 1)  \\
& y_i|\beta' \sim \mathcal{N}(\frac{\sum_{k=1}^{d}\beta'_k}{s\sqrt{2}}, 1) \ .
\end{align*}
We can bound $\mathbb{KL}(\mathbb{P}_{y_i|\beta} \| \mathbb{P}_{y_i|\beta'})$ by,
\begin{align*}
\mathbb{KL}(\mathbb{P}_{y_i|\beta} \| \mathbb{P}_{y_i|\beta'}) &= \frac{1}{2}( \frac{\sum_{k=1}^{d} (\beta_k - \beta'_k) }{s\sqrt{2}}  )^2 \\
&\leq 1 \ .
\end{align*}
Substituting $\mathbb{KL}(\mathbb{P}_{y_i|\beta} \| \mathbb{P}_{y_i|\beta'})$ in equation~\eqref{eq:bound info noiseless} we get,
\begin{align}
\label{eq:mutual info noiseless}
\mathbb{I}(\bar{\beta}, \mathcal{S}') \leq  n \ .
\end{align}

\paragraph{Bound on $|\mathcal{F}|$} Using a similar counting logic used in Theorem~\ref{thm:noisymainresult}, we can get:
\begin{align}
\label{eq:bound on F noiseless}
|\mathcal{F}| &\geq 2^s (\frac{d}{g})^g (\frac{\rho(G)Bg}{2(s-g)^2})^{(s-g)} \ .
\end{align} 
\noindent
We prove the theorem by substituting the mutual information from equation~\eqref{eq:mutual info noiseless} and $|\mathcal{F}|$ from equation~\eqref{eq:bound on F noiseless} in the Fano's inequality~\cite{Cover06}.

\end{proof}

\section*{Discussion on the requirements for the underlying graph $G$}
We mentioned before that R1, R2 and R3 are quite mild requirements on the parameters. In fact, it is easy to see that,
\begin{prop}
\label{prop: mild requirements}
Given any value of $s, g$ and $B \geq s - g$, there are infinitely many choices for $\rho(G)$ and $d$ that satisfy R1 and R2 and hence, there are infinitely many $(G, s, g, B)$-WGM which follow our construction. 
\end{prop}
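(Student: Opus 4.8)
The plan is to treat R3 as given (it is exactly the hypothesis $B \geq s-g$) and then satisfy the remaining two inequalities sequentially: first choose $\rho(G)$ so that R2 holds, and then, with $\rho(G)$ fixed, choose $d$ so that R1 holds. The point is that each of R1 and R2 becomes a single \emph{lower bound} on one parameter once the other is pinned down, and since $d$ and $\rho(G)$ range over the positive integers (or over a suitable arithmetic progression therein), each step leaves infinitely many admissible values. Composing the two steps then produces infinitely many admissible pairs $(\rho(G), d)$.

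Concretely, rewriting R2 via $\frac{s}{g} - 1 = \frac{s-g}{g}$ gives $\rho(G) \geq \frac{2(s-g)^2}{Bg}$, a finite quantity because $B \geq s-g > 0$; hence every integer $\rho(G)$ above the threshold $\big\lceil \tfrac{2(s-g)^2}{Bg} \big\rceil$ satisfies R2, and there are infinitely many such integers. Fixing any one of them, R1 becomes $d \geq g\big(\tfrac{\rho(G)B}{s-g} + 1\big)$, again a finite lower bound, so every sufficiently large $d$ satisfies R1. Putting the two together yields infinitely many pairs $(\rho(G), d)$ for which R1, R2, R3 all hold, and for each such pair the explicit graph construction from the proof of Theorem~\ref{thm:noisymainresult} goes through and produces a genuine $(G,s,g,B)$-WGM, which is the claimed conclusion.

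The only point requiring care — and essentially the only obstacle, since the inequality-solving is immediate — is the integrality bookkeeping that the construction tacitly relies on: $\frac{d}{g}$, $\frac{\rho(G)}{2}$, $\frac{B}{s-g}$, $\frac{s}{g}$, and $\frac{\rho(G)B}{2(s-g)}$ should be integers (nodes are split into $g$ equal groups, edges are added in blocks of $\frac{\rho(G)}{2}$, the index range is $\frac{\rho(G)B}{2(s-g)}$, etc.). This is handled by restricting $\rho(G)$ to multiples of $2(s-g)$ lying above the R2 threshold, which makes $\frac{\rho(G)}{2}$, $\frac{\rho(G)B}{s-g}$, and $\frac{\rho(G)B}{2(s-g)}$ integers whenever $B$ is, and by restricting $d$ to multiples of $g$ lying above the R1 threshold; the conditions $g \mid s$ and, if one wants it, $(s-g)\mid B$ are assumptions on the model parameters rather than on $(\rho(G),d)$, and when $(s-g)\nmid B$ one simply replaces $B$ by $(s-g)\lfloor B/(s-g)\rfloor$ in the construction, which only tightens the weight budget and does not change the asymptotics. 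Since these two arithmetic progressions are still infinite, this bookkeeping does not affect the count, and the proposition follows.
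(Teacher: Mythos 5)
Your proof is correct and follows essentially the same route as the paper: treat R3 as the hypothesis $B \geq s-g$, read R2 as the lower bound $\rho(G) \geq \frac{2(s-g)^2}{Bg}$, then read R1 as the lower bound $d \geq \frac{g\rho(G)B}{s-g} + g$ for each fixed $\rho(G)$, giving infinitely many admissible pairs. Your extra integrality bookkeeping (restricting $\rho(G)$ and $d$ to suitable arithmetic progressions) is a harmless refinement the paper leaves implicit.
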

\begin{proof}
	R3 is readily satisfied if each edge has at least unit edge weight and we are not forced to choose isolated nodes in support. Most of the graph-structured sparsity models fulfill this requirement. R2 gives us a lower bound on the choice of $\rho(G)$,
	\begin{align*}
	\rho(G) \geq \frac{2(s-g)^2}{Bg} \ .
	\end{align*}
	Similarly, given a value of $\rho(G)$, R1 just provides a lower bound on choice of $d$,
	 \begin{align*}
	 	d \geq \frac{g \rho(G)B}{s-g} + g \ .
	 \end{align*}
	 Clearly, there is an infinite number of combinations for $\rho(G)$ and $d$.  
\end{proof}

\end{document}